\DeclareMathOperator*{\E}{\mathbb{E}}
\newtheorem{prop}{Proposition}
\newtheorem{lemma}{Lemma}
\title{End to end learning and optimization on graphs}
\author{%
  Bryan Wilder \\
  Harvard University\\
  \texttt{bwilder@g.harvard.edu} \\
   \And
   Eric Ewing \\
   University of Southern California\\
   \texttt{ericewin@usc.edu} \\
   \AND
   Bistra Dilkina \\
   University of Southern California\\
   \texttt{dilkina@usc.edu} \\
   \And
   Milind Tambe \\
  Harvard University\\
   \texttt{milind\_tambe@harvard.edu} \\
}
\begin{document}

\maketitle

\begin{abstract}

Real-world applications often combine learning and optimization problems on graphs. For instance, our objective may be to cluster the graph in order to detect meaningful communities (or solve other common graph optimization problems such as facility location, maxcut, and so on). However, graphs or related attributes are often only partially observed, introducing learning problems such as link prediction which must be solved prior to optimization. Standard approaches treat learning and optimization entirely separately, while recent machine learning work aims to predict the optimal solution directly from the inputs. Here, we propose an alternative \emph{decision-focused learning} approach that integrates a differentiable proxy for common graph optimization problems as a layer in learned systems. The main idea is to learn a representation that maps the original optimization problem onto a simpler proxy problem that can be efficiently differentiated through. Experimental results show that our \textsc{ClusterNet} system outperforms both pure end-to-end approaches (that directly predict the optimal solution) and standard approaches that entirely separate learning and optimization. Code for our system is available at \url{https://github.com/bwilder0/clusternet}.

\end{abstract}

\section{Introduction}

While deep learning has proven enormously successful at a range of tasks, an expanding area of interest concerns systems that can flexibly combine learning with optimization. Examples include recent attempts to solve combinatorial optimization problems using neural architectures \cite{vinyals2015pointer,khalil2017learning,bello2016neural,kool2019attention}, as well as work which incorporates explicit optimization algorithms into larger differentiable systems \cite{amos2017optnet,donti2017task,wilder2019melding}. The ability to combine learning and optimization promises improved performance for real-world problems which require decisions to be made on the basis of machine learning predictions by enabling end-to-end training which focuses the learned model on the decision problem at hand.  

We focus on graph optimization problems, an expansive subclass of combinatorial optimization. While graph optimization is ubiquitous across domains, complete applications must also solve machine learning challenges. For instance, the input graph is usually incomplete; some edges may be unobserved or nodes may have attributes that are only partially known. Recent work has introduced sophisticated methods for tasks such as link prediction and semi-supervised classification \cite{perozzi2014deepwalk,kipf2017semi,schlichtkrull2018modeling,hamilton2017inductive,zhang2018link}, but these methods are developed in isolation of downstream optimization tasks. Most current solutions use a two-stage approach which first trains a model using a standard loss and then plugs the model's predictions into an optimization algorithm (\cite{yan2012detecting,burgess2016link,bahulkar2018community,berlusconi2016link,tan2016efficient}). However, predictions which minimize a standard loss function (e.g., cross-entropy) may be suboptimal for specific optimization tasks, especially in difficult settings where even the best model is imperfect.

A preferable approach is to incorporate the downstream optimization problem into the training of the machine learning model. A great deal of recent work takes a pure end-to-end approach where a neural network is trained to predict a solution to the optimization problem using supervised or reinforcement learning \cite{vinyals2015pointer,khalil2017learning,bello2016neural,kool2019attention}. However, this often requires a large amount of data and results in suboptimal performance because the network needs to discover algorithmic structure entirely from scratch. Between the extremes of an entirely two stage approach and pure end-to-end architectures, \textit{decision-focused learning} \cite{donti2017task,wilder2019melding} embeds a solver for the optimization problem as a differentiable layer within a learned system. This allows the model to train using the downstream performance that it induces as the loss, while leveraging prior algorithmic knowledge for optimization. The downside is that this approach requires manual effort to develop a differentiable solver for each particular problem and often results in cumbersome systems that must, e.g, call a quadratic programming solver every forward pass.

We propose a new approach that gets the best of both worlds: incorporate a solver for a simpler optimization problem as a differentiable layer, and then learn a representation that maps the (harder) problem of interest onto an instance of the simpler problem. Compared to earlier approaches to decision-focused learning, this places more emphasis on the representation learning component of the system and simplifies the optimization component. However, compared to pure end-to-end approaches, we only need to learn the reduction to the simpler problem instead of the entire algorithm.  

In this work, we instantiate the simpler problem as a differentiable version of $k$-means clustering. Clustering is motivated by the observation that graph neural networks embed nodes into a continuous space, allowing us to approximate optimization over the discrete graph with optimization in continuous embedding space. We then interpret the cluster assignments as a solution to the discrete problem. We instantiate this approach for two classes of optimization problems: those that require \emph{partitioning} the graph (e.g., community detection or maxcut), and those that require \emph{selecting a subset of $K$ nodes} (facility location, influence maximization, immunization, etc). We don't claim that clustering is the right algorithmic structure for all tasks, but it is sufficient for many problems as shown in this paper.


In short, we make three contributions. First, we introduce a general framework for integrating graph learning and optimization, with a simpler optimization problem in continuous space as a proxy for the more complex discrete problem. Second, we show how to differentiate through the clustering layer, allowing it to be used in deep learning systems. Third, we show experimental improvements over both two-stage baselines as well as alternate end-to-end approaches on a range of example domains.

\section{Related work}
	
We build on a recent work on decision-focused learning \cite{donti2017task,wilder2019melding,demirovicinvestigation}, which includes a solver for an optimization problem into training in order to improve performance on a downstream decision problem. A related line of work develops and analyzes effective surrogate loss functions for predict-then-optimize problems \cite{elmachtoub2017smart,balghiti2019generalization}. Some work in structured prediction also integrates differentiable solvers for discrete problems (e.g., image segmentation \cite{djolonga2017differentiable} or time series alignment \cite{mensch2018differentiable}). Our work differs in two ways. First, we tackle more difficult optimization problems. Previous work mostly focuses on convex problems \cite{donti2017task} or discrete problems with near-lossless convex relations \cite{wilder2019melding,djolonga2017differentiable}. We focus on highly combinatorial problems where the methods of choice are hand-designed discrete algorithms. Second, in response to this difficulty, we differ methodologically in that we do not attempt to include a solver for the exact optimization problem at hand (or a close relaxation of it). Instead, we include a more generic algorithmic skeleton that is automatically finetuned to the optimization problem at hand.
	
There is also recent interest in training neural networks to solve combinatorial optimization problems \cite{vinyals2015pointer,khalil2017learning,bello2016neural,kool2019attention}. While we focus mostly on combining graph learning with optimization, our model can also be trained just to solve an optimization problem given complete information about the input. The main methodological difference is that we include more structure via a differentiable $k$-means layer instead of using more generic tools (e.g., feed-forward or attention layers). Another difference is that prior work mostly trains via reinforcement learning. By contrast, we use a differentiable approximation to the objective which removes the need for a policy gradient estimator. This is a benefit of our architecture, in which the final decision is fully differentiable in terms of the model parameters instead of requiring non-differentiable selection steps (as in \cite{khalil2017learning,bello2016neural,kool2019attention}). We give our end-to-end baseline (``GCN-e2e") the same advantage by training it with the same differentiable decision loss as our own model instead of forcing it to use noisier policy gradient estimates. 
	
Finally, some  work uses deep architectures as a part of a clustering algorithm \cite{tian2014learning,law2017deep,guo2017improved,shaham2018spectralnet,nazi2019gap}, or includes a clustering step as a component of a deep network \cite{greff2016tagger,greff2017neural,ying2018hierarchical}. While some techniques are similar, the overall task we address and framework we propose are entirely distinct. Our aim is not to cluster a Euclidean dataset (as in \cite{tian2014learning,law2017deep,guo2017improved,shaham2018spectralnet}), or to solve perceptual grouping problems (as in \cite{greff2016tagger,greff2017neural}). Rather, we propose an approach for graph optimization problems. Perhaps the closest of this work is Neural EM \cite{greff2017neural}, which uses an unrolled EM algorithm to learn representations of visual objects. Rather than using EM to infer representations for objects, we use $k$-means in graph embedding space to solve an optimization problem. There is also some work which uses deep networks for graph clustering \cite{xie2016unsupervised,yang2016modularity}. However, none of this work includes an explicit clustering algorithm in the network, and none consider our goal of integrating graph learning and optimization.

\begin{figure}
\centering
\label{fig:system}
\includegraphics[width=5.4in]{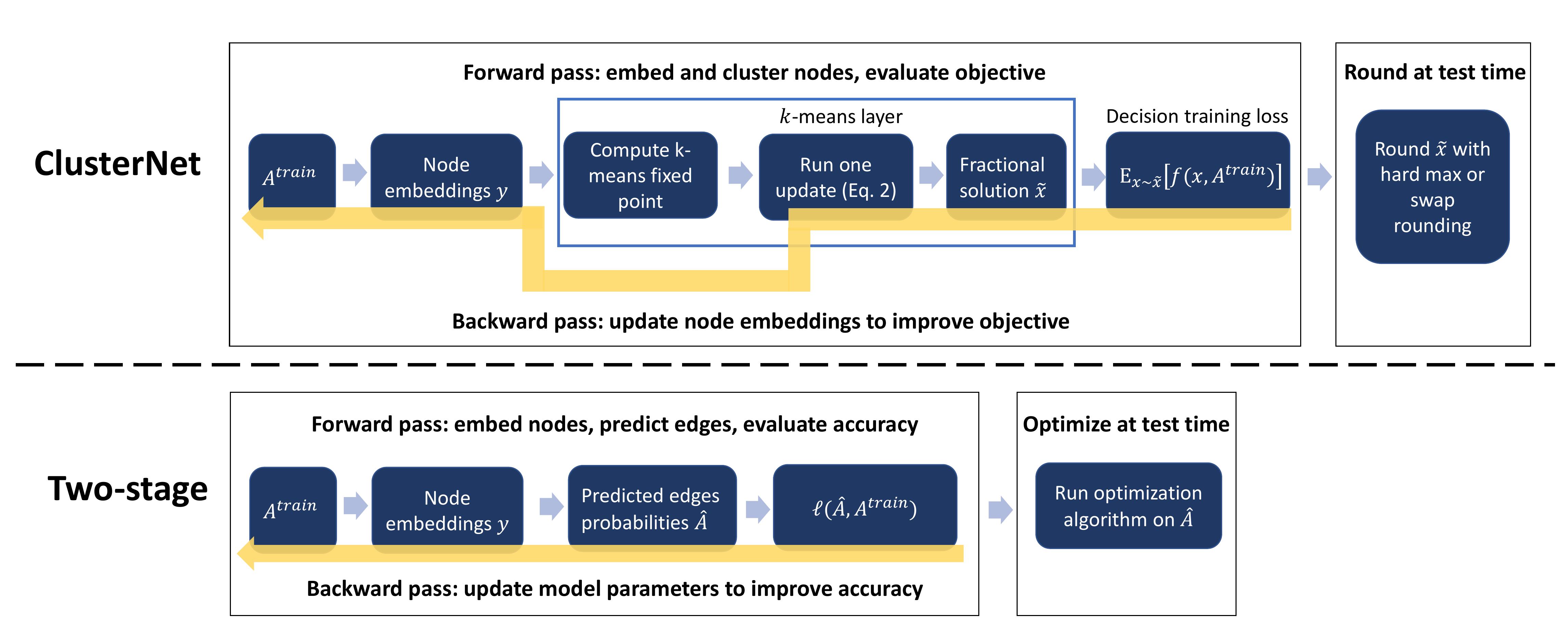}
\caption{Top: \textsc{ClusterNet}, our proposed system. Bottom: a typical two-stage approach.}
\end{figure}

\section{Setting}

We consider settings that combine learning and optimization. The input is a graph $G = (V, E)$, which is in some way partially observed. We will formalize our problem in terms of link prediction as an example, but our framework applies to other common graph learning problems (e.g., semi-supervised classification). In link prediction, the graph is not entirely known; instead, we observe only training edges $E^{train} \subset E$. Let $A$ denote the adjacency matrix of the graph and $A^{train}$ denote the adjacency matrix with only the training edges. The learning task is to predict $A$ from $A^{train}$. In domains we consider, the motivation for performing link prediction, is to solve a decision problem for which the objective depends on the full graph. Specifically, we have a decision variable $x$, objective function $f(x, A)$, and a feasible set $\mathcal{X}$. We aim to solve the optimization problem 
\begin{align} \label{eq:optproblem}
    \max_{x \in \mathcal{X}} f(x, A).
\end{align}
However, $A$ is unobserved. We can also consider an inductive setting in which we observe graphs $A_1,...,A_m$ as training examples and then seek to predict edges for a partially observed graph from the same distribution. The most common approach to either setting is to train a model to reconstruct $A$ from $A^{train}$ using a standard loss function (e.g., cross-entropy), producing an estimate $\hat{A}$. The \emph{two-stage} approach plugs $\hat{A}$ into an optimization algorithm for Problem \ref{eq:optproblem}, maximizing $f(x, \hat{A})$. 

We propose end-to-end models which map from $A^{train}$ directly to a feasible decision $x$. The model will be trained to maximize $f(x, A^{train})$, i.e., the quality of its decision evaluated on the training data (instead of a loss $\ell(\hat{A}, A^{train})$ that measures purely predictive accuracy). One approach is to ``learn away" the problem by training a standard model (e.g., a GCN) to map directly from $A^{train}$ to $x$. However, this forces the model to entirely rediscover algorithmic concepts, while two-stage methods are able to exploit highly sophisticated optimization methods. We propose an alternative that embeds algorithmic structure into the learned model, getting the best of both worlds. 

\section{Approach: \textsc{ClusterNet}}

Our proposed \textsc{ClusterNet} system (Figure 1) merges two differentiable components into a system that is trained end-to-end. First, a \emph{graph embedding} layer which uses $A^{train}$ and any node features to embed the nodes of the graph into $\mathbb{R}^p$. In our experiments, we use GCNs \cite{kipf2017semi}. Second, a layer that performs \emph{differentiable optimization}. This layer takes the continuous-space embeddings as input and uses them to produce a solution $x$ to the graph optimization problem. Specifically, we propose to use a layer that implements a differentiable version of $K$-means clustering. This layer produces a soft assignment of the nodes to clusters, along with the cluster centers in embedding space.

The intuition is that cluster assignments can be interpreted as the solution to many common graph optimization problems. For instance, in community detection we can interpret the cluster assignments as assigning the nodes to communities. Or, in maxcut, we can use two clusters to assign nodes to either side of the cut. Another example is maximum coverage and related problems, where we attempt to select a set of $K$ nodes which cover (are neighbors to) as many other nodes as possible. This problem can be approximated by clustering the nodes into $K$ components and choosing nodes whose embedding is close to the center of each cluster.  We do not claim that any of these problems is exactly reducible to $K$-means. Rather, the idea is that including $K$-means as a layer in the network provides a useful inductive bias. This algorithmic structure can be fine-tuned to specific problems by training the first component, which produces the embeddings, so that the learned representations induce clusterings with high objective value for the underlying downstream optimization task. We now explain the optimization layer of our system in greater detail. We start by detailing the forward and the backward pass for the clustering procedure, and then explain how the cluster assignments can be interpreted as solutions to the graph optimization problem. 

\subsection{Forward pass}

Let $x_j$ denote the embedding of node $j$ and $\mu_k$ denote the center of cluster $k$. $r_{jk}$ denotes the degree to which node $j$ is assigned to cluster $k$. In traditional $K$-means, this is a binary quantity, but we will relax it to a fractional value such that $\sum_{k} r_{jk} = 1$ for all $j$. Specifically, we take $r_{jk} = \frac{\exp(-\beta||x_j - \mu_k||)}{\sum_{\ell} \exp(-\beta||x_j - \mu_\ell||)}$, which is a soft-min assignment of each point to the cluster centers based on distance. While our architecture can be used with any norm $||\cdot||$, we use the negative cosine similarity due to its strong empirical performance. $\beta$ is an inverse-temperature hyperparameter; taking $\beta \to \infty$ recovers the standard $k$-means assignment. We can optimize the cluster centers via an iterative process analogous to the typical $k$-means updates by alternately setting
\begin{align} \label{eq:kmeans}
    &\mu_k  = \frac{\sum_j r_{jk} x_j}{\sum_j r_{jk}} \,\, \forall k = 1...K \quad 
   r_{jk} = \frac{\exp(-\beta ||x_j - \mu_k||)}{\sum_{\ell} \exp(-\beta ||x_j - \mu_\ell||)} \,\, \forall k = 1...K, j = 1...n.
\end{align}
These iterates converge to a fixed point where $\mu$ remains the same between successive updates \cite{mackay2003information}. The output of the forward pass is the final pair $(\mu, r)$. 

\subsection{Backward pass}

We will use the implicit function theorem to analytically differentiate through the fixed point that the forward pass $k$-means iterates converge to, obtaining expressions for $\frac{\partial \mu}{\partial x}$ and $\frac{\partial r}{\partial x}$. Previous work \cite{donti2017task,wilder2019melding} has used the implicit function theorem to differentiate through the KKT conditions of optimization problems; here we take a more direct approach that characterizes the update process itself. Doing so allows us to backpropagate gradients from the decision loss to the component that produced the embeddings $x$. Define a function $f: \mathbb{R}^{Kp} \to \mathbb{R}$ as 
\begin{align} \label{eq:f}
    f_{i, \ell}(\mu, x) = \mu_i^\ell -  \frac{\sum_j r_{jk} x_j^\ell}{\sum_j r_{jk}}
\end{align}
Now, $(\mu, x)$ are a fixed point of the iterates if $f(\mu, x) = \bm{0}$. Applying the implicit function theorem yields that $
    \frac{\partial \mu}{\partial x} = -\left[\frac{\partial f(\mu, x)}{\partial \mu}\right]^{-1}  \frac{\partial f(\mu, x)}{\partial x}
$, from which $\frac{\partial r}{\partial x}$ can be easily obtained via the chain rule. 

\textbf{Exact backward pass: } We now examine the process of calculating $\frac{\partial \mu}{\partial x}$. Both  $ \frac{\partial f(\mu, x)}{\partial x}$ and $\frac{\partial f(\mu, x)}{\partial \mu}$ can be easily calculated in closed form (see appendix). Computing the former requires time $O(nKp^2)$. Computing the latter requires $O(np K^2)$ time, after which it must be inverted (or else iterative methods must be used to compute the product with its inverse). This requires time $O(K^3 p^3)$ since it is a matrix of size $(Kp) \times (Kp)$. While the exact backward pass may be feasible for some problems, it quickly becomes burdensome  for large instances. We now propose a fast approximation.

\textbf{Approximate backward pass: } We start from the observation that $\frac{\partial f}{\partial \mu}$ will often be dominated by its diagonal terms (the identity matrix). The off-diagonal entries capture the extent to which updates to one entry of $\mu$ indirectly impact other entries via changes to the cluster assignments $r$. However, when the cluster assignments are relatively firm, $r$ will not be highly sensitive to small changes to the cluster centers. We find to be typical empirically, especially since the optimal choice of the parameter $\beta$ (which controls the hardness of the cluster assignments) is typically fairly high. Under these conditions, we can approximate $\frac{\partial f}{\partial \mu}$ by its diagonal, $\frac{\partial f}{\partial \mu} \approx I$. This in turn gives $\frac{\partial \mu}{\partial x} \approx -\frac{\partial f}{\partial x}$.

We can formally justify this approximation when the clusters are relatively balanced and well-separated. More precisely, define $c(j) = \arg\max_{i} r_{ji}$ to be the closest cluster to point $j$. Proposition 1 (proved in the appendix) shows that the quality of the diagonal approximation improves exponentially quickly in the product of two terms: $\beta$, the hardness of the cluster assignments, and $\delta$, which measures how well separated the clusters are. $\alpha$ (defined below) measures the balance of the cluster sizes. We assume for convenience that the input is scaled so $||x_j||_1 \leq 1 \, \forall j$. 

\begin{prop}
Suppose that for all points $j$, $||x_j - \mu_{i}|| - ||x_j - \mu_{c(j)}|| \geq \delta$ for all $i \not= c(j)$ and that for all clusters $i$, $\sum_{j =1}^n r_{ji} \geq \alpha n$. Moreover, suppose that $\beta \delta > \log \frac{2\beta K^2}{\alpha}$. Then,  $\left\lvert \left\lvert \frac{\partial f}{\partial \mu} - I\right\rvert\right\rvert_1 \leq \exp(-\delta\beta) \left(\frac{K^2 \beta}{\frac{1}{2}\alpha - K^2\beta \exp(-\delta\beta)}\right)$ where $||\cdot||_1$ is the operator 1-norm. 
\end{prop}
We now show that the approximate gradient obtained by taking $\frac{\partial f}{\partial \mu} = I$ can be calculated by unrolling a single iteration of the forward-pass updates from Equation \ref{eq:kmeans} at convergence. Examining Equation \ref{eq:f}, we see that the first term ($\mu_i^\ell$) is constant with respect to $x$, since here $\mu$ is a fixed value. Hence,
\begin{align*}
    -\frac{\partial f_k}{\partial x} = \frac{\partial}{\partial x} \frac{\sum_j r_{jk} x_j}{\sum_j r_{jk}}
\end{align*}
which is just the update equation for $\mu_k$. Since the forward-pass updates are written entirely in terms of differentiable functions, we can automatically compute the approximate backward pass with respect to $x$ (i.e., compute products with our approximations to $\frac{\partial \mu }{\partial x}$ and $\frac{\partial r }{\partial x}$) by applying standard autodifferentiation tools to the final update of the forward pass. Compared to computing the exact analytical gradients, this avoids the need to explicitly reason about or invert $\frac{\partial f}{\partial \mu}$. The final iteration (the one which is differentiated through) requires time $O(npK)$, \emph{linear} in the size of the data. 

Compared to differentiating by unrolling the entire sequence of updates in the computational graph (as has been suggested for other problems \cite{domke2012generic,andrychowicz2016learning,zheng2015conditional}), our approach has two key advantages. First, it avoids storing the entire history of updates and backpropagating through all of them. The runtime for our approximation is independent of the number of updates needed to reach convergence. Second, \emph{we can in fact use entirely non-differentiable operations to arrive at the fixed point}, e.g., heuristics for the $K$-means problem, stochastic methods which only examine subsets of the data, etc. This allows the forward pass to scale to larger datasets since we can use the best algorithmic tools available, not just those that can be explicitly encoded in the autodifferentiation tool's computational graph. 

\subsection{Obtaining solutions to the optimization problem}

Having obtained the cluster assignments $r$, along with the centers $\mu$, in a differentiable manner, we need a way to \textbf{(1)} differentiably interpret the clustering as a soft solution to the optimization problem, \textbf{(2)} differentiate a relaxation of the objective value of the graph optimization problem in terms of that solution, and then \textbf{(3)} round to a discrete solution at test time. We give a generic means of accomplishing these three steps for two broad classes of problems: those that involve \emph{partitioning the graph into $K$ disjoint components}, and those that that involve \emph{selecting a subset of $K$ nodes}. 

\textbf{Partitioning: } \textbf{(1)} We can naturally interpret the cluster assignments $r$ as a soft partitioning of the graph.  \textbf{(2)} One generic  continuous objective function (defined on soft partitions) follows from the random process of assigning each node $j$ to a partition with probabilities given by $r_j$, repeating this process independently across all nodes. This gives the expected training decision loss $\ell = \E_{r^{hard} \sim r}[f(r^{hard}, A^{train})]$, where $r^{hard} \sim r$ denotes this random assignment. $\ell$ is now differentiable in terms of $r$, and can be computed in closed form via standard autodifferentiation tools for many problems of interest (see Section \ref{section:experiments}). We remark that when the expectation is not available in closed form, our approach could still be applied by repeatedly sampling $r^{hard} \sim r$ and using a policy gradient estimator to compute the gradient of the resulting objective. \textbf{(3)} At test time, we simply apply a hard maximum to $r$ to obtain each node's assignment. 

\textbf{Subset selection: } \textbf{(1)} Here, it is less obvious how to obtain a subset of $K$ nodes from the cluster assignments. Our continuous solution will be a vector $x$, $0 \leq x \leq 1$, where $||x||_1 = K$. Intuitively, $x_j$ is the probability of including $x_j$ in the solution. Our approach obtains $x_j$ by placing greater probability mass on nodes that are near the cluster centers. Specifically, each center $\mu_i$ is endowed with one unit of probability mass, which it allocates to the points $x$ as $a_{ij} = \text{softmin}(\eta ||x  - \mu_i||)_j$. The total probability allocated to node $j$ is $b_j = \sum_{i = 1}^K a_{ij}$. Since we may have $b_j > 1$, we pass $b$ through a sigmoid function to cap the entries at 1; specifically, we take $x = 2*\sigma(\gamma b) - 0.5$ where $\gamma$ is a tunable parameter. If the resulting $x$ exceeds the budget constraint ($||x||_1 > K$), we instead output $\frac{K x}{||x||_1}$ to ensure a feasible solution. 

\textbf{(2)} We interpret this solution in terms of the objective similarly as above. Specifically, we consider the result of drawing a discrete solution $x^{hard} \sim x$ where every node $j$ is included (i.e., set to 1) independently with probability $x_j$ from the end of step \textbf{(1)}. The training objective is then $\E_{x^{hard} \sim x}[f(x^{hard}, A^{train})]$. For many problems, this can again be computed and differentiated through in closed form (see Section \ref{section:experiments}). 

\textbf{(3)} At test time, we need a feasible discrete vector $x$; note that independently rounding the individual entries may produce a vector with more than $K$ ones. Here, we apply a fairly generic approach based on pipage rounding \cite{ageev2004pipage}, a randomized rounding scheme which has been applied to many problems (particularly those with submodular objectives). Pipage rounding can be implemented to produce a random feasible solution in time $O(n)$ \cite{karimi2017stochastic}; in practice we round several times and take the solution with the best decision loss on the observed edges. While pipage rounding has theoretical guarantees only for specific classes of functions, we find it to work well even in other domains (e.g., facility location). However, more domain-specific rounding methods can be applied if available. 

\begin{table}
\caption{Performance on the community detection task} \label{table:com}
\centering
\fontsize{9}{9}
\selectfont
\begin{tabular}{@{}llllllllllll@{}}
\toprule
                  & \multicolumn{5}{c}{Learning + optimization}                                                                                          &                      & \multicolumn{5}{c}{Optimization}                                                                                                     \\ \cmidrule(lr){2-6} \cmidrule(l){8-12} 
                  & \multicolumn{1}{c}{cora} & \multicolumn{1}{c}{cite.} & \multicolumn{1}{c}{prot.} & \multicolumn{1}{c}{adol} & \multicolumn{1}{c}{fb} & \multicolumn{1}{c}{} & \multicolumn{1}{c}{cora} & \multicolumn{1}{c}{cite.} & \multicolumn{1}{c}{prot.} & \multicolumn{1}{c}{adol} & \multicolumn{1}{c}{fb} \\ \midrule
                  &                          &                           &                           &                          &                        &                      &                          &                           &                           &                          &                        \\
ClusterNet        & \textbf{0.54}            & \textbf{0.55}             & \textbf{0.29}             & \textbf{0.49}            & \textbf{0.30}          &                      & \textbf{0.72}            & \textbf{0.73}             & \textbf{0.52}             & \textbf{0.58}            & 0.76                   \\
GCN-e2e           & 0.16                     & 0.02                      & 0.13                      & 0.12                     & 0.13                   &                      & 0.19                     & 0.03                      & 0.16                      & 0.20                     & 0.23         \\
Train-CNM         & 0.20                     & 0.42                      & 0.09                      & 0.01                     & 0.14                   &                      & 0.08                     & 0.34                      & 0.05                      & 0.57                     & \textbf{0.77}                   \\
Train-Newman      & 0.09                     & 0.15                      & 0.15                      & 0.15                     & 0.08                   &                      & 0.20                     & 0.23                      & 0.29                      & 0.30                     & 0.55                   \\
Train-SC          & 0.03                     & 0.02                      & 0.03                      & 0.23                     & 0.19                   &                      & 0.09                     & 0.05                      & 0.06                      & 0.49                     & 0.61                   \\
GCN-2stage-CNM    & 0.17                     & 0.21                      & 0.18                      & 0.28                     & 0.13                   &                      & -                        & -                         & -                         & -                        & -                      \\
GCN-2stage-Newman & 0.00                     & 0.00                      & 0.00                      & 0.14                     & 0.02                   &                      & -                        & -                         & -                         & -                        & -                      \\
GCN-2stage-SC     & 0.14                     & 0.16                      & 0.04                      & 0.31                     & 0.25                   &                      & -                        & -                         & -                         & -                        & -                      \\ \bottomrule
\end{tabular}
\end{table}
\begin{table}[]
\caption{Performance on the facility location task.} \label{table:facloc}
\centering
\fontsize{9}{9}
\selectfont
\begin{tabular}{@{}lccccccccccc@{}}
\toprule
                    & \multicolumn{5}{c}{Learning + optimization}                                                                      & \multicolumn{1}{l}{} & \multicolumn{5}{c}{Optimization}                                                                                 \\ \cmidrule(lr){2-6} \cmidrule(l){8-12} 
                    & cora                 & cite.                & prot.                & adol                 & fb                   &                      & cora                 & cite.                & prot.                & adol                 & fb                   \\ \midrule
                    & \multicolumn{1}{l}{} & \multicolumn{1}{l}{} & \multicolumn{1}{l}{} & \multicolumn{1}{l}{} & \multicolumn{1}{l}{} & \multicolumn{1}{l}{} & \multicolumn{1}{l}{} & \multicolumn{1}{l}{} & \multicolumn{1}{l}{} & \multicolumn{1}{l}{} & \multicolumn{1}{l}{} \\
ClusterNet          & \textbf{10}          & \textbf{14}          & \textbf{6}           & \textbf{6}           & \textbf{4}           &                      & \textbf{9}           & \textbf{14}          & \textbf{6}           & \textbf{5}           & \textbf{3}           \\
GCN-e2e             & 12                   & 15                   & 8                    & \textbf{6}           & 5                    &                      & 11                   & \textbf{14}          & 7                    & 6                    & 5                    \\
Train-greedy        & 14                   & 16                   & 8                    & 8                    & 6                    &                      & \textbf{9}                    & \textbf{14}          & 7                    & 6                    & 5                    \\
Train-gonzalez      & 12                   & 17                   & 8                    & \textbf{6}           & 6                    &                      & 10          & 15                   & 7                    & 7                    & \textbf{3}           \\
GCN-2Stage-greedy   & 14                   & 17                   & 8                    & 7                    & 6                    &                      & -                    & -                    & -                    & -                    & -                    \\
GCN-2Stage-gonzalez & 13                   & 17                   & 8                    & \textbf{6}           & 6                    &                      & -                    & -                    & -                    & -                    & -                    \\ \bottomrule
\end{tabular}
\end{table}

\section{Experimental results} \label{section:experiments}

We now show experiments on domains that combine link prediction with optimization. 

\textbf{Learning problem:} In link prediction, we observe a partial graph and aim to infer which unobserved edges are present. In each of the experiments, we hold out $60\%$ of the edges in the graph, with $40\%$ observed during training. We used a graph dataset which is not included in our results to set our method's hyperparameters, which were kept constant across datasets (see appendix for details). The learning task is to use the training edges to predict whether the remaining edges are present, after which we will solve an optimization problem on the predicted graph. The objective is to find a solution with high objective value measured on the \emph{entire} graph, not just the training edges.

\textbf{Optimization problems:} We consider two optimization tasks, one from each of the broad classes introduced above. First, \emph{community detection} aims to partition the nodes of the graph into $K$ distinct subgroups which are dense internally, but with few edges across groups. Formally, the objective is to find a partition maximizing the modularity \cite{newman2006modularity}, defined as 

\begin{align*}
    Q(r)  = \frac{1}{2m}\sum_{u, v \in V} \sum_{k=1}^K \left[A_{uv} - \frac{d_ud_v}{2m}\right] r_{uk} r_{vk}.
\end{align*} Here, $d_v$ is the degree of node $v$, and $r_{vk}$ is 1 if node $v$ is assigned to community $k$ and zero otherwise. This measures the number of edges within communities compared to the expected number if edges were placed randomly. Our clustering module has one cluster for each of the $K$ communities. Defining $B$ to be the modularity matrix with entries $B_{uv} = A_{uv} - \frac{d_u d_v}{2m}$, our training objective (the expected value of a partition sampled according to $r$) is $\frac{1}{2m} \mathrm{Tr}\left[r^\top B^{train} r\right]$.


Second, minmax \emph{facility location}, where the problem is to select a subset of $K$ nodes from the graph, minimizing the maximum distance from any node to a facility (selected node). Letting $d(v, S)$ be the shortest path length from a vertex $v$ to a set of vertices $S$, the objective is $f(S) = \min_{|S|\leq k} \max_{v \in V} d(v, S)$. To obtain the training loss, we take two steps. First, we replace $d(v, S)$ by $\E_{S \sim x}[d(v, S)]$, where $S \sim x$ denotes drawing a set from the product distribution with marginals $x$. This can easily be calculated in closed form \cite{karimi2017stochastic}. Second, we replace the $\min$ with a softmin. 


\textbf{Baseline learning methods:} We instantiate \textsc{ClusterNet} using a 2-layer GCN for node embeddings, followed by a clustering layer. We compare to three families of baselines. \emph{First}, GCN-2stage, the two-stage approach which first trains a model for link prediction, and then inputs the predicted graph into an optimization algorithm. For link prediction, we use the GCN-based system of \cite{schlichtkrull2018modeling} (we also adopt their training procedure, including negative sampling and edge dropout). For the optimization algorithms, we use standard approaches for each domain, outlined below. \emph{Second}, ``train", which runs each optimization algorithm only on the observed training subgraph (without attempting any link prediction).  \emph{Third}, GCN-e2e, an end-to-end approach which does not include explicit algorithm structure. We train a GCN-based network to directly predict the final decision variable ($r$ or $x$) using the same training objectives as our own model. Empirically, we observed best performance with a 2-layer GCN. This baseline allows us to isolate the benefits of including algorithmic structure.  

\textbf{Baseline optimization approaches}: In each domain, we compare to expert-designed optimization algorithms found in the literature. In community detection, we compare to ``CNM" \cite{clauset2004finding}, an agglomerative approach, ``Newman", an approach that recursively partitions the graph \cite{newman2006finding}, and ``SC", which performs spectral clustering \cite{von2007tutorial} on the modularity matrix. In facility location, we compare to ``greedy", the common heuristic of iteratively selecting the point with greatest marginal improvement in objective value, and ``gonzalez" \cite{gonzalez1985clustering}, an algorithm which iteratively selects the node furthest from the current set. ``gonzalez" attains the optimal 2-approximation for this problem (note that the minmax facility location objective is non-submodular, ruling out the usual $\left(1 - 1/e\right)$-approximation). 

\textbf{Datasets: } We use several standard graph datasets: cora \cite{sen2008collective} (a citation network with 2,708 nodes), citeseer \cite{sen2008collective} (a citation network with 3,327 nodes), protein \cite{koblenzprotein} (a protein interaction network with 3,133 nodes), adol \cite{koblenzadhealth} (an adolescent social network with 2,539 vertices), and fb \cite{koblenzfb,leskovec2012learning} (an online social network with 2,888 nodes). For facility location, we use the largest connected component of the graph (since otherwise distances may be infinite). Cora and citeseer have node features (based on a bag-of-words representation of the document), which were given to all GCN-based methods. For the other datasets, we generated unsupervised node2vec features \cite{grover2016node2vec} using the training edges.

\subsection{Results on single graphs} \label{section:singlegraph}

We start out with results for the combined link prediction and optimization problem. Table \ref{table:com} shows the objective value obtained by each approach on the full graph for community detection, with Table \ref{table:facloc} showing facility location. We focus first on the ``Learning + Optimization" column which shows the combined link prediction/optimization task. We use $K = 5$ clusters; $K = 10$ is very similar and may be found in the appendix. \textsc{ClusterNet} outperforms the baselines in nearly all cases, often substantially. GCN-e2e learns to produce nontrivial solutions, often rivaling the other baseline methods. However, the explicit structure used by our approach \textsc{ClusterNet} results in much higher performance. 

Interestingly, the two stage approach sometimes performs worse than the train-only baseline which optimizes just based on the training edges (without attempting to learn). This indicates that approaches which attempt to accurately reconstruct the graph can \textit{sometimes} miss qualities which are important for optimization, and in the worst case may simply add noise that overwhelms the signal in the training edges. In order to confirm that the two-stage method learned to make meaningful predictions, in the appendix we give AUC values for each dataset. The average AUC value is 0.7584, indicating that the two-stage model does learn to make nontrivial predictions. However, the small amount of training data (only 40\% of edges are observed) prevents it from perfectly reconstructing the true graph. This drives home the point that decision-focused learning methods such as \textsc{ClusterNet} can offer substantial benefits when highly accurate predictions are out of reach even for sophisticated learning methods. 

We next examine an optimization-only task where the entire graph is available as input (the ``Optimization" column of Tables \ref{table:com} and Table \ref{table:facloc}). This tests \textsc{ClusterNet}'s ability to learn to solve combinatorial optimization problems compared to expert-designed algorithms, even when there is no partial information or learning problem in play. We find that \textsc{ClusterNet} is highly competitive, meeting and frequently exceeding the baselines. It is particularly effective for community detection, where we observe large (> 3x) improvements compared to the best baseline on some datasets. At facility location, our method always at least ties the baselines, and frequently improves on them. These experiments provide evidence that our approach, which is automatically specialized during training to optimize on a given graph, can rival and exceed hand-designed algorithms from the literature. The alternate learning approach, GCN-e2e, which is an end-to-end approach that tries to learn to predicts optimization solutions directly from the node features, at best ties the baselines and typically underperforms. This underscores the benefit of including algorithmic structure as a part of the end-to-end architecture.  

\begin{table}[]
\caption{Inductive results. ``\%" is the fraction of test instances for which a method attains top performance (including ties). ``Finetune" methods are excluded from this in the ``No finetune" section.} \label{table:inductive}
\fontsize{9}{9}
\selectfont
\begin{tabular}{@{}llrllrlllllll@{}}
\toprule
                 & \multicolumn{5}{c}{Community detection}                                        &  & \multicolumn{6}{c}{Facility location}                                               \\ \midrule
                 & \multicolumn{2}{c}{synthetic}        &  & \multicolumn{2}{c}{pubmed}           &  &                  & \multicolumn{2}{l}{synthetic}  &  & \multicolumn{2}{l}{pubmed}   \\ \cmidrule(lr){2-3} \cmidrule(lr){5-6} \cmidrule(lr){9-10} \cmidrule(l){12-13} 
No finetune      & Avg.          & \%                   &  & Avg.          & \%                   &  & No finetune      & Avg.          & \%             &  & Avg.          & \%           \\ \midrule
ClusterNet       & \textbf{0.57} & \textbf{26/30}       &  & \textbf{0.30} & \textbf{7/8}         &  & ClusterNet       & \textbf{7.90} & \textbf{25/30} &  & 7.88          & 3/8          \\
GCN-e2e          & 0.26          & 0/30                 &  & 0.01          & 0/8                  &  & GCN-e2e          & 8.63          & 11/30          &  & 8.62          & 1/8          \\
Train-CNM        & 0.14          & 0/30                 &  & 0.16          & 1/8                  &  & Train-greedy     & 14.00         & 0/30           &  & 9.50          & 1/8          \\
Train-Newman     & 0.24          & 0/30                 &  & 0.17          & 0/8                  &  & Train-gonzalez   & 10.30         & 2/30           &  & 9.38          & 1/8          \\
Train-SC         & 0.16          & 0/30                 &  & 0.04          & 0/8                  &  & 2Stage-greedy    & 9.60          & 3/30           &  & 10.00         & 0/8          \\
2Stage-CNM        & 0.51          & 0/30                 &  & 0.24          & 0/8                  &  & 2Stage-gonz.     & 10.00         & 2/30           &  & \textbf{6.88} & \textbf{5/8} \\
2Stage-Newman     & 0.01          & 0/30                 &  & 0.01          & 0/8                  &  & ClstrNet-1train  & 7.93          & 12/30          &  & 7.88          & 2/8          \\
2Stage-SC         & 0.52          & 4/30                 &  & 0.15          & 0/8                  &  &                  &               &                &  &               &              \\
ClstrNet-1train  & 0.55          & 0/30                 &  & 0.25          & 0/8                  &  &                  &               &                &  &               &              \\ \midrule
Finetune         &               & \multicolumn{1}{l}{} &  &               & \multicolumn{1}{l}{} &  & Finetune         &               &                &  &               &              \\ \midrule
ClstrNet-ft      & 0.60          & 20/30                &  & 0.40          & 2/8                  &  & ClstrNet-ft      & 8.08          & 12/30          &  & 8.01          & 3/8          \\
ClstrNet-ft-only & 0.60          & 10/30                &  & 0.42          & 6/8                  &  & ClstrNet-ft-only & 7.84          & 16/30          &  & 7.76          & 4/8          \\ \bottomrule
\end{tabular}
\end{table}

\subsection{Generalizing across graphs}

Next, we investigate whether our method can learn generalizable strategies for optimization: can we train the model on one set of graphs drawn from some distribution and then apply it to unseen graphs? We consider two graph distributions. First, a synthetic generator introduced by \cite{wilder2018optimizing}, which is based on the spatial preferential attachment model \cite{barthelemy2011spatial} (details in the appendix). We use 20 training graphs, 10 validation, and 30 test. Second, a dataset obtained by splitting the pubmed graph into 20 components using metis \cite{karypis1998fast}. We fix 10 training graphs, 2 validation, and 8 test. At test time, only 40\% of the edges in each graph are revealed, matching the ``Learning + optimization" setup above. 


Table \ref{table:inductive} shows the results. To start out, we do not conduct any fine-tuning to the test graphs, evaluating entirely the generalizability of the learned representations. \textsc{ClusterNet} outperforms all baseline methods on all tasks, except for facility location on pubmed where it places second. We conclude that the learned model successfully generalizes to completely unseen graphs. We next investigate (in the ``finetune" section of Table \ref{table:inductive}) whether \textsc{ClusterNet}'s performance can be further improved by fine-tuning to the 40\% of observed edges for each test graph (treating each test graph as an instance of the link prediction problem from Section \ref{section:singlegraph}, but initializing with the parameters of the model learned over the training graphs).  We see that \textsc{ClusterNet}'s performance typically improves, indicating that fine-tuning can allow us to extract additional gains if extra training time is available. 

Interestingly, \emph{only} fine-tuning (not using the training graphs at all) yields similar performance (the row ``ClstrNet-ft-only"). While our earlier results show that \textsc{ClusterNet} can learn generalizable strategies, doing so may not be necessary when there is the opportunity to fine-tune. This allows a trade-off between quality and runtime: without fine-tuning, applying our method at test time requires just a single forward pass, which is extremely efficient. If additional computational cost at test time is acceptable, fine-tuning can be used to improve performance. Complete runtimes for all methods are shown in the appendix. \textsc{ClusterNet}'s forward pass (i.e., no fine-tuning) is extremely efficient, requiring at most 0.23 seconds on the largest network, and is \emph{always faster than the baselines} (on identical hardware). Fine-tuning requires longer, on par with the slowest baseline.

We lastly investigate the reason why pretraining provides little to no improvement over only fine-tuning. Essentially, we find that \textsc{ClusterNet} is extremely sample-efficient: using only a single training graph results in nearly as good performance as the full training set (and still better than all of the baselines), as seen in the ``ClstrNet-1train" row of Table \ref{table:inductive}. \textit{That is, \textsc{ClusterNet} is capable of learning optimization strategies that generalize with strong performance to completely unseen graphs after observing only a single training example}. This underscores the benefits of including algorithmic structure as a part of the architecture, which guides the model towards learning meaningful strategies.  

\section{Conclusion}

When machine learning is used to inform decision-making, it is often necessary to incorporate the downstream optimization problem into training. Here, we proposed a new approach to this decision-focused learning problem: include a differentiable solver for a simple proxy to the true, difficult optimization problem and learn a representation that maps the difficult problem to the simpler one. This representation is trained in an entirely automatic way, using the solution quality for the true downstream problem as the loss function. We find that this ``middle path" for including algorithmic structure in learning improves over both two-stage approaches, which separate learning and optimization entirely, and purely end-to-end approaches, which use learning to directly predict the optimal solution. Here, we instantiated this framework for a class of graph optimization problems. We hope that future work will explore such ideas for other families of problems, paving the way for flexible and efficient optimization-based structure in deep learning.   

\section*{Acknowledgements} This work was supported by the
Army Research Office (MURI W911NF1810208). Wilder is supported by a NSF Graduate Research Fellowship. Dilkina is supported partially by NSF award \# 1914522 and by U.S. Department of Homeland Security under Grant Award No. 2015-ST-061-CIRC01. The views and conclusions contained in this document are those of the authors and should not be interpreted as necessarily representing the official policies, either expressed or implied, of the U.S Department of Homeland Security.

\bibliography{ref}
\bibliographystyle{plain}

\newpage
\appendix

\section{Proofs}

\subsection{Exact expression for gradients}
Define $R_i = \sum_{j = 1}^n r_{ji}$ and $C_i = \sum_{j = 1}^n r_{ji} x_j$. We will work with $C_i \in \mathbb{R}^{p \times 1}$ as a column vector. For a fixed $i, j$, we have
\begin{align*}
    &\frac{\partial f_{i, \cdot}}{\partial x_j} = -\frac{R_i  x_j \left[\frac{\partial r_{ji}}{\partial x_j}\right]^\top - C_i \left[\frac{\partial r_{ji}}{\partial x_j}\right]^\top}{R_i^2} - \frac{r_{ji}}{R_i} I
\end{align*}
where $I$ denotes the $p$-dimensional identity matrix. Similarly, fixing $i, k$ gives 
\begin{align*}
    \frac{\partial f_{i, \cdot}}{\partial \mu_k} = \delta_{ik}I - \frac{R_i \sum_{j = 1}^n x_j \left[\frac{\partial r_{ji}}{\partial \mu_k}\right]^\top - C_i \left[\sum_{j = 1}^n  \frac{\partial r_{ji}}{\partial \mu_k}\right]^\top }{R_i^2}
\end{align*}

\subsection{Guarantee for approximate gradients}
\begin{prop}
Suppose that for all points $j$, $||x_j - \mu_{i}|| - ||x_j - \mu_{c(j)}|| \geq \delta$ for all $i \not= c(j)$ and that for all clusters $i$, $\sum_{j =1}^n r_{ji} \geq \alpha n$. Moreover, suppose that $\beta \delta > \log \frac{2\beta K^2}{\alpha}$. Then,  $\left\lvert \left\lvert \frac{\partial f}{\partial \mu} - I\right\rvert\right\rvert_1 \leq \exp(-\delta\beta) \left(\frac{K^2 \beta}{\frac{1}{2}\alpha - K^2\beta \exp(-\delta\beta)}\right)$ where $||\cdot||_1$ is the operator 1-norm. 
\end{prop}

We focus on the off-diagonal component of $\frac{\partial f_{im}}{\partial \mu_{k \ell}}$, given by

\begin{align*}
    A_{(i,m), (k, \ell)} &=  - \frac{R_i \sum_{j = 1}^n x_j^m \left[\frac{\partial r_{ji}}{\partial \mu_k^\ell}\right] - C_i^m \left[\sum_{j = 1}^n  \frac{\partial r_{ji}}{\partial \mu_k^\ell}\right] }{R_i^2}\\
\end{align*}

The key term here is $\frac{\partial r_{ji}}{\partial \mu_k^\ell}$. Let $s_{ji} = - \beta ||x_j - \mu_i||$ Since $r$ is defined via the softmax function, we have

\begin{align*}
    \frac{\partial r_{ji}}{\partial \mu_k^\ell} = \frac{\partial{r_{ji}}}{\partial s_{jk}} \frac{\partial s_{jk}}{\partial \mu_k^\ell}
\end{align*}

where

\begin{align*}
    \frac{\partial{r_{ji}}}{\partial s_{jk}} = \begin{cases}
    r_{ji} (1 - r_{ji}) & \text{if } $i = k$\\
    -r_{ji} r_{jk} & \text{otherwise}.
    \end{cases}
\end{align*}

Note now via Lemma \ref{lemma:softmax}, in both cases we have that 

\begin{align*}
    \left\lvert\frac{\partial{r_{ji}}}{\partial s_{jk}}\right\rvert \leq K \exp(-\beta\delta)
\end{align*}

Define $\epsilon = K \exp(-\beta\delta)$ and note that we have that $ \left\lvert \frac{\partial s_{jk}}{\partial \mu_k^\ell}\right\rvert \leq \beta$, since we defined $s$ in terms of cosine similarity and have assumed that the input is normalized. Putting this together, we have

\begin{align*}
    \left\lvert  A_{(i,m), (k, \ell)} \right\rvert &\leq \frac{\sum_{j=1}^n x_j^m \epsilon\beta }{R_i} + \frac{C_i^m n \epsilon \beta }{R_i^2}\\
    &\leq \frac{\epsilon\beta \sum_{j=1}^n x_j^m}{\alpha n} + \frac{\mu_i^m n \epsilon \beta}{R_i}\\
    &\leq \frac{\epsilon\beta \sum_{j=1}^n x_j^m}{\alpha n} + \frac{\mu_i^m \epsilon \beta}{\alpha}
\end{align*}

and so 

\begin{align*}
    ||A||_1 &= \max_{(k, \ell)} \sum_{(i, m)} A_{(i,m), (k, \ell)}\\
    &\leq  \max_{(k, \ell)} \sum_{(i, m)} \frac{\epsilon\beta \sum_{j=1}^n x_j^m}{\alpha n} + \frac{\mu_i^m  \epsilon \beta}{\alpha}\\
    &\leq \max_{(k, \ell)} \sum_{i} \frac{\epsilon\beta n}{\alpha n} + \frac{\epsilon \beta}{\alpha} \quad \text{(since $||x_j||_1, ||\mu_i||_1 \leq 1$)}\\
    &\leq \frac{2K\epsilon\beta}{\alpha}\\
    &= \frac{2K^2\beta \exp(-\beta\delta)}{\alpha}.
\end{align*}

Since by assumption $\beta \delta > \log \frac{2\beta K^2}{\alpha}$, we know that $||A||_1 < 1$ and applying Lemma \ref{lemma:inverse} competes the proof.


\begin{lemma}
Consider a point $j$ and let $i = \arg\max_k r_{jk}$. Then, $r_{ji} \geq \frac{1}{1 + K \exp(-\beta \delta)}$, and correspondingly, $\sum_{k \not=i} r_{jk} \leq \frac{K \exp(-\beta \delta)}{K \exp(-\beta \delta) + 1} \leq K\exp(-\beta\delta)$. \label{lemma:softmax}
\end{lemma}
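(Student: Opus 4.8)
The plan is to work directly from the definition of the soft assignment $r_{jk} = \frac{\exp(-\beta||x_j - \mu_k||)}{\sum_\ell \exp(-\beta||x_j - \mu_\ell||)}$ and to exploit the separation hypothesis that enters through $\delta$. First I would note that since $r_{jk}$ is a strictly decreasing function of the distance $||x_j - \mu_k||$, the index $i = \arg\max_k r_{jk}$ is precisely the nearest center, $i = c(j) = \arg\min_k ||x_j - \mu_k||$. Hence the gap condition from Proposition 1, namely $||x_j - \mu_k|| - ||x_j - \mu_i|| \geq \delta$, holds for every $k \neq i$; this is what lets the lemma be phrased in terms of $\delta$.

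The core computation is to put $r_{ji}$ into a normalized form by dividing numerator and denominator by $\exp(-\beta ||x_j - \mu_i||)$, giving
\begin{align*}
r_{ji} = \frac{1}{1 + \sum_{\ell \neq i} \exp\!\big(-\beta(||x_j - \mu_\ell|| - ||x_j - \mu_i||)\big)}.
\end{align*}
Each of the $K-1$ summands has exponent at most $-\beta\delta$ by the gap condition, so the sum is bounded by $(K-1)\exp(-\beta\delta) \leq K\exp(-\beta\delta)$. Since $t \mapsto 1/(1+t)$ is decreasing, upper-bounding this sum lower-bounds $r_{ji}$, yielding $r_{ji} \geq \frac{1}{1 + K\exp(-\beta\delta)}$.

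The second claim then follows immediately from $\sum_{k\neq i} r_{jk} = 1 - r_{ji}$: substituting the lower bound on $r_{ji}$ gives $\sum_{k\neq i} r_{jk} \leq \frac{K\exp(-\beta\delta)}{1 + K\exp(-\beta\delta)}$, and then dropping the additive term in the denominator (using $1 + K\exp(-\beta\delta) \geq 1$) yields the cruder but convenient bound $K\exp(-\beta\delta)$.

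There is no genuinely hard step here. The only points requiring minor care are the monotonicity argument identifying $\arg\max_k r_{jk}$ with the nearest center, so that the separation hypothesis applies with the correct sign, and keeping track that there are $K-1$ rather than $K$ off-diagonal terms (harmlessly absorbed into the factor $K$). The lemma is in effect a one-line softmax concentration estimate, and its purpose is to supply the bound $\epsilon = K\exp(-\beta\delta)$ on the softmax Jacobian entries $\left\lvert \frac{\partial r_{ji}}{\partial s_{jk}}\right\rvert$ used in the proof of Proposition 1.
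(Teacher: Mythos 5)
Your proof is correct, but it takes a genuinely different route from the paper's. The paper does not normalize the softmax directly; instead it invokes an external result (the ``one-vs-each'' bound, Equation 4 of the cited work \cite{aueb2016one}), which gives the product-form lower bound $r_{ji} \geq \prod_{k \neq i} \frac{1}{1 + \exp(-(s_{ji} - s_{jk}))}$, bounds each factor by $\frac{1}{1+\exp(-\beta\delta)}$ using the separation hypothesis, and then converts the resulting product $\frac{1}{(1+\exp(-\beta\delta))^{K-1}}$ into the additive form $\frac{1}{1+K\exp(-\beta\delta)}$, justified only by the remark that $\exp(-\beta\delta) \leq 1$. Your argument is more elementary and self-contained: dividing through by $\exp(-\beta\lVert x_j - \mu_i\rVert)$ gives the exact identity $r_{ji} = \bigl(1 + \sum_{\ell \neq i} \exp(-\beta(\lVert x_j - \mu_\ell\rVert - \lVert x_j - \mu_i\rVert))\bigr)^{-1}$, after which the separation hypothesis immediately yields the claim with the slightly sharper constant $K-1$ in place of $K$. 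This also buys something concrete: the paper's conversion step requires $(1+\epsilon)^{K-1} \leq 1 + K\epsilon$ with $\epsilon = \exp(-\beta\delta)$, which is \emph{not} implied by $\epsilon \leq 1$ alone (e.g., it fails for $K = 4$, $\epsilon = 1$), so as written the paper's proof has a gap in the regime of large $\epsilon$, whereas your direct computation is unconditionally valid. Your identification of $\arg\max_k r_{jk}$ with the nearest center and the handling of the second claim via $\sum_{k \neq i} r_{jk} = 1 - r_{ji}$ match the intended use of the lemma in Proposition 1.
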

\begin{proof}
Equation 4 of \cite{aueb2016one} gives that 

\begin{align*}
    r_{ij} \geq \prod_{k \not= i} \frac{1}{1  + \exp(-(s_i - s_k))}.
\end{align*}

Since by assumption we have $-||x_j - \mu_i|| \geq \delta ||x_j - \mu_k||$, we obtain

\begin{align*}
    r_{ij} &\geq \prod_{k \not= i} \frac{1}{1  + \exp(-\delta \beta)}\\
    &\geq \frac{1}{1 + K \exp(-\delta \beta)} \quad  (\text{using that } \exp(-\delta \beta) \leq 1).
\end{align*}

which proves the lemma.
\end{proof}

\begin{lemma}
Suppose that for a matrix $A$, $||A -I|| \leq \delta$ for some $\delta < 1$ and an operator norm $||\cdot||$. Then, $||A^{-1} - I|| \leq \frac{\delta}{1 - \delta}$.  \label{lemma:inverse}
\end{lemma}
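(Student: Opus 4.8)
The plan is to recognize this as a standard Neumann series perturbation bound. First I would set $E = A - I$, so that $A = I + E$ with $||E|| \leq \delta < 1$ by hypothesis. The strict inequality $\delta < 1$ is exactly what guarantees $A$ is invertible: since the operator norm is submultiplicative we have $||E^n|| \leq ||E||^n \leq \delta^n$, so the Neumann series $\sum_{n=0}^\infty (-E)^n$ converges absolutely in operator norm, being dominated by the convergent geometric series $\sum_n \delta^n = \frac{1}{1-\delta}$. A short telescoping computation then identifies this series with $A^{-1}$: multiplying the partial sum $S_N = \sum_{n=0}^N (-E)^n$ on the left by $A = I + E$ collapses to $I - (-E)^{N+1}$, and since $||(-E)^{N+1}|| \leq \delta^{N+1} \to 0$ as $N \to \infty$, we conclude $A S_N \to I$ and hence $A^{-1} = \sum_{n=0}^\infty (-E)^n$.

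Having established this representation, I would isolate the quantity of interest simply by dropping the $n = 0$ term, so that $A^{-1} - I = \sum_{n=1}^\infty (-E)^n$. The claimed bound then follows at once from the triangle inequality together with submultiplicativity:
\[
    ||A^{-1} - I|| \leq \sum_{n=1}^\infty ||E||^n \leq \sum_{n=1}^\infty \delta^n = \frac{\delta}{1 - \delta}.
\]
This is precisely the asserted inequality, completing the argument.

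An equivalent and slightly more compact route, if one prefers not to write out the series term by term, is to use the identity $A^{-1} - I = A^{-1}(I - A) = -A^{-1}E$, which gives $||A^{-1} - I|| \leq ||A^{-1}||\,\delta$, and then to bound $||A^{-1}|| \leq \frac{1}{1-\delta}$ by the same geometric argument applied to the full Neumann series. There is no genuine obstacle in this proof; the only point requiring any care is justifying convergence of the series and its identification with $A^{-1}$, and both rest solely on the hypothesis $\delta < 1$ and the submultiplicativity of the operator norm.
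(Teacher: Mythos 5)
Your proof is correct and follows essentially the same route as the paper's: write $A$ as a perturbation of the identity, expand $A^{-1}$ as a Neumann series, drop the $n=0$ term, and bound the tail by the geometric series $\sum_{n\geq 1}\delta^n = \frac{\delta}{1-\delta}$ using submultiplicativity. The only differences are cosmetic (you use $E = A - I$ where the paper uses $B = I - A$) and that you justify the convergence and inverse identification explicitly, which the paper leaves implicit.
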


\begin{proof}
Let $B = I - A$. We have 

\begin{align*}
    A^{-1} &= (I - B)^{-1}\\
    &= \sum_{i = 0}^\infty B^i \quad (\text{using the Neumann series representation})\\
    &= I + \sum_{i = 1}^\infty B^i
\end{align*}

and so $||A^{-1} - I||_\infty = \left\lvert \left\lvert \sum_{i = 1}^\infty B^i \right\rvert \right\rvert_\infty$. We have
\begin{align*}
\left\lvert \left\lvert \sum_{i = 1}^\infty B^i \right\rvert \right\rvert_\infty &\leq \sum_{i=1}^\infty ||B^i||_\infty\\
&\leq \sum_{i = 1}^\infty ||B||^i_\infty \quad \text{(since operator norms are submultiplicative)}\\
&= \frac{\delta}{1-\delta} \quad \text{(geometric series)}.
\end{align*}

\end{proof}

\section{Experimental setup details}

\subsection{Hyperparameters}

All methods were trained with the Adam optimizer. For the single-graph experiments, we tested the following settings on the pubmed graph (which was not used in our single-graph experiments):

\begin{itemize}
    \item $\beta$ = 1, 10, 30, 50
    \item learning rate = 0.01, 0.001
    \item training iterations = 100, 200, ..., 1000
    \item Number of forward pass $k$-means updates: 1, 3
    \item Whether to increase the number of $k$-means updates to 5 after 500 training iterations.
    \item GCN hidden layer size: 20, 50, 100
    \item Embedding dimension: 20, 50, 100
\end{itemize}

For all single-graph experiments we used $\beta = 30$ for the facility location objective and $\beta=50$ for community detection, $\gamma = 100$, GCN hidden layer = embedding dimension = 50, 1 $k$-means update in the forward pass, learning rate = 0.01, and 1000 training iterations, with the number of $k$-means updates increasing to 5 after 500 iterations.  

We tested the following set of hyperparameters on the validation set for each graph distribution

\begin{itemize}
    \item $\beta$ = 30, 50, 70, 100
    \item learning rate = 0.01, 0.001
    \item dropout = 0.5, 0.2
    \item training iterations = 10, 20...300
    \item Number of forward pass $k$-means updates: 1, 5, 10, 15
    \item Hidden layer size: 20, 50, 100
    \item Embedding dimension: 20, 50, 100
\end{itemize}

We selected $\beta = 70$, learning rate = 0.001, dropout = 0.2, and hidden layer = embedding dimension = 50 for all experiments. On the synthetic graphs we used 70 training iterations and 10 forward-pass $k$-means updates. For pubmed, we used 220 and 1, respectively. 

\subsection{Synthetic graph generation}

Each node has a set of attributes $y_i$ (in this case, demographic features simulated from real population data); node $i$ forms a connection to node $j$ with probability proportional to $e^{-\frac{1}{\rho}||y_i - y_j||}d(j)$ where $d(j)$ is the degree of node $j$. This models both the homophily and heavy-tailed degree distribution seen in real world networks. We took $\rho = 0.025$ to obtain a high degree of homophily, so that there is meaningful community structure. In order to make the problem more difficult, our method does not observe the features $y$; instead, we generate unsupervised features from the graph structure alone using role2vec \cite{ahmed2018learning} (which generates inductive representations based on motif counts that are meaningful across graphs). Each graph has 500 nodes.

\subsection{Code}
See \url{https://github.com/bwilder0/clusternet} for code and data used to run the experiments. 

\subsection{Hardware}
All methods were run on a machine with 14 i9 3.1 GHz cores and 128 GB of RAM. For fair runtime comparisons with the baselines, all methods were run on CPU. 

\section{Results for $K=10$}

\begin{table}[h]
\caption{Results for community detection. ``-" for GCN-2Stage-Newman in the Learning + optimization section denotes that the method could not be run due to numerical issues.}

\begin{tabular}{@{}llllllllllll@{}}
\toprule
                  & \multicolumn{5}{c}{Learning + optimization}                                                                                          &                      & \multicolumn{5}{c}{Optimization}                                                                                                     \\ \cmidrule(lr){2-6} \cmidrule(l){8-12} 
                  & \multicolumn{1}{c}{cora} & \multicolumn{1}{c}{cite.} & \multicolumn{1}{c}{prot.} & \multicolumn{1}{c}{adol} & \multicolumn{1}{c}{fb} & \multicolumn{1}{c}{} & \multicolumn{1}{c}{cora} & \multicolumn{1}{c}{cite.} & \multicolumn{1}{c}{prot.} & \multicolumn{1}{c}{adol} & \multicolumn{1}{c}{fb} \\ \midrule
                  &                          &                           &                           &                          &                        &                      &                          &                           &                           &                          &                        \\
ClusterNet        & \textbf{0.56}            & \textbf{0.53}             & \textbf{0.28}             & \textbf{0.47}            & \textbf{0.28}          &                      & \textbf{0.71}            & \textbf{0.76}             & \textbf{0.52}             & 0.55                     & \textbf{0.80}          \\
GCN-e2e           & 0.01                     & 0.01                      & 0.06                      & 0.08                     & 0.00                   &                      & 0.07                     & 0.08                      & 0.14                      & 0.15                     & 0.15                   \\
Train-CNM         & 0.20                     & 0.44                      & 0.09                      & 0.01                     & 0.17                   &                      & 0.08                     & 0.34                      & 0.05                      & \textbf{0.60}            & \textbf{0.80}          \\
Train-Newman      & 0.08                     & 0.15                      & 0.15                      & 0.14                     & 0.07                   &                      & 0.20                     & 0.22                      & 0.29                      & 0.30                     & 0.47                   \\
Train-SC          & 0.06                     & 0.04                      & 0.05                      & 0.22                     & 0.21                   &                      & 0.15                     & 0.08                      & 0.07                      & 0.46                     & 0.79                   \\
GCN-2stage-CNM    & 0.20                     & 0.23                      & 0.18                      & 0.32                     & 0.08                   &                      & -                        & -                         & -                         & -                        & -                      \\
GCN-2stage-Newman & 0.01                     & 0.00                      & 0.00                      & -                        & 0.00                   &                      & -                        & -                         & -                         & -                        & -                      \\
GCN-2stage-SC     & 0.13                     & 0.18                      & 0.10                      & 0.29                     & 0.18                   &                      & -                        & -                         & -                         & -                        & -                      \\ \bottomrule
\end{tabular}
\end{table}

\begin{table}[h]
\caption{Results for facility location}
\begin{tabular}{@{}lccccccccccc@{}}
\toprule
                    & \multicolumn{5}{c}{Learning + optimization}                                                                      & \multicolumn{1}{l}{} & \multicolumn{5}{c}{Optimization}                                                                                 \\ \cmidrule(lr){2-6} \cmidrule(l){8-12} 
                    & cora                 & cite.                & prot.                & adol                 & fb                   &                      & cora                 & cite.                & prot.                & adol                 & fb                   \\ \midrule
                    & \multicolumn{1}{l}{} & \multicolumn{1}{l}{} & \multicolumn{1}{l}{} & \multicolumn{1}{l}{} & \multicolumn{1}{l}{} & \multicolumn{1}{l}{} & \multicolumn{1}{l}{} & \multicolumn{1}{l}{} & \multicolumn{1}{l}{} & \multicolumn{1}{l}{} & \multicolumn{1}{l}{} \\
ClusterNet          & \textbf{9}           & \textbf{14}          & \textbf{7}           & \textbf{5}           & \textbf{2}           &                      & \textbf{8}           & \textbf{13}          & \textbf{6}           & \textbf{5}           & \textbf{2}           \\
GCN-e2e             & 12                   & 15                   & 8                    & 6                    & 4                    &                      & 10                   & 14                   & 7                    & \textbf{5}           & 4                    \\
Train-greedy        & 14                   & 16                   & 8                    & 8                    & 6                    &                      & 9                    & 14                   & 7                    & 6                    & 5                    \\
Train-gonzalez      & 11                   & 15                   & 8                    & 7                    & 6                    &                      & 9                    & \textbf{13}          & 7                    & 6                    & \textbf{2}           \\
GCN-2Stage-greedy   & 14                   & 16                   & 8                    & 7                    & 6                    &                      & -                    & -                    & -                    & -                    & -                    \\
GCN-2Stage-gonzalez & 12                   & 16                   & 8                    & 6                    & 5                    &                      & -                    & -                    & -                    & -                    & -                    \\ \bottomrule
\end{tabular}
\end{table}

\section{Timing Results}

We run experiments on Intel i9 7940X @ 3.1 GHz with 128 GB of RAM. We report runtime in seconds. For algorithms with learned models, we report both the training time and the time to complete a single forward pass.

\begin{table}[H]
\centering
\fontsize{9}{9}
\selectfont
\caption{Timing results for the community detection task (s)}
\begin{tabular}{lrrrrr}
\toprule
& cora & cite. & prot. & adol & fb\\
\midrule
ClusterNet - Training Time & 59.48 & 149.73 & 129.63 & 56.68 & 54.33\\
ClusterNet - Forward Pass  &  0.04 &  0.12 &  0.11 &  0.04 & 0.05\\ 
GCN-e2e - Training Time    &  36.83 &  54.99 & 34.60 &29.04 &  28.17\\
GCN-e2e - Forward Pass     &  0.002 & 0.005  & 0.002 &0.003 &  0.001\\
Train-CNM          & 1.31  & 1.28  & 1.02 &1.03 &  2.94 \\ 
Train-Newman       &  9.99 & 15.89  & 15.19 &11.45 &   7.25\\
Train-SC           & 0.41  & 0.62  &0.55 &0.38  &  0.48\\
GCN-2Stage - Training Time &  68.79 & 72.20  & 75.69 &103.56 &  57.62 \\
GCN-2Stage-CNM     &  119.34 &  178.39 & 159.64 & 101.64& 142.02\\
GCN-2Stage-New.    & 37.96  &  58.26 & 51.70 & 33.14&   43.88\\
GCN-2Stage-SC      & 0.40  &  0.61 & 0.50 & 0.33 &   0.36\\
\bottomrule
\end{tabular} \quad\\
\end{table}

\begin{table}[H]
\centering
\fontsize{9}{9}
\selectfont
\caption{Timing results for the kcenter task (s)}
\begin{tabular}{lrrrrr}
\toprule
& cora & cite. & prot. & adol & fb\\
\midrule 
ClusterNet - Training Time & 264.14 & 555.84 & 488.37& 244.74 &246.57\\
ClusterNet - Forward Pass  & 0.10 & 0.23& 0.20& 0.09 &0.11\\ 
GCN-e2e - Training Time    & 237.68 & 511.23 & 446.76& 229.49 & 221.28\\
GCN-e2e - Forward Pass     & 0.003 & 0.006 & .005 & 0.004&.003\\
Train-Greedy & 1029.18 & 2387 & 1966 & 619.06 &1244.09\\
Train-Gonzalez & 0.082 & 0.14 & 0.12 & 0.07 &.066\\
GCN-2Stage - Training Time & 73.82 & 70.21& 103.98&75.48&104.66\\
GCN-2Stage-Greedy & 1189.15 & 2367 & 2017 & 621.59 &1237.871\\
GCN-2Stage-Gonzalez & 0.18 & 0.28 & 0.25& 0.13 &0.13\\
\bottomrule
\end{tabular}
\end{table}

\begin{table}[H]
\centering
\fontsize{9}{9}
\selectfont
\caption{Timing results in the inductive setting for community detection task (s)}
\begin{tabular}{lrr}
\toprule& synthetic & pubmed\\
\midrule
ClusterNet - Training time & 6.57 & 13.74\\
ClusterNet - Forward Pass & 0.003 & 0.008\\ 
GCN-e2e - Training time & 11.40& 15.86\\
GCN-e2e - Forward Pass & 0.04 & 0.03\\
Train-CNM & 0.08 & 0.17\\ 
Train-Newman & 0.65 & 1.83\\
Train-SC & 0.03 & 0.04\\
2Stage - Train & 10.98 & 15.86\\
2Stage-CNM & 3.23& 13.73\\
2Stage-New. & 1.12 & 4.29\\
2Stage-SC & 0.04 & 0.10\\
\bottomrule
\end{tabular} \quad
\end{table}

\begin{table}[H]
\caption{Timing results in the inductive setting for the kcenter task (s)}
\centering
\fontsize{9}{9}
\selectfont
\begin{tabular}{lrr}
\toprule
& synthetic & pubmed\\
\midrule
ClusterNet - Training Time & 14.36 & 43.06\\
ClusterNet - Forward Pass & 0.005 & 0.02\\ 
GCN-e2e - Training Time & 9.49 & 33.73\\
GCN-e2e - Forward Pass & 0.01 & 0.02\\
Train-Gonzalez & 0.07 & 0.49\\
Train-Greedy & 4.99 & 32.7\\
2Stage - Train & 11.00 & 15.78\\
2Stage-Gonzalez & 0.07 & 0.07\\
2Stage-Greedy & 5.31 & 16.16\\
\bottomrule
\end{tabular} \quad\\
\end{table}

\end{document}